\DeclareMathOperator{\Tr}{Tr}
\newcommand{\W}{\mathbf{W}}
\newcommand{\bL}{\mathbf{L}}
\newcommand{\F}{\mathbf{F}}
\newcommand{\st}{{\rm s.t.}}
\newcommand{\x}{\mathbf{x}}
\newcommand{\f}{\mathbf{f}}
\newcommand{\br}{\mathbf{r}}
\newcommand{\bu}{\mathbf{u}}
\newcommand{\I}{\mathbf{I}}
\newcommand{\Real}{{\mathbb R}}
\newtheorem{proposition}{Proposition}
\newtheorem{note}{Note}
\title{Structured and Deep Similarity Matching via  Structured and Deep Hebbian Networks}
\author{%
\shortstack{
Dina Obeid
\hspace{2em}%
Hugo Ramambason \hspace{2em}
Cengiz Pehlevan
}
\\ \\
John A. Paulson School of Engineering and Applied Sciences \\
Harvard University\\
Cambridge, MA, USA \\ \\
\texttt{\{dinaobeid@seas,hugo$\_$ramambason@g,cpehlevan@seas\}.harvard.edu}
}
\begin{document}

\maketitle

\begin{abstract}
Synaptic plasticity is widely accepted to be the mechanism behind learning in the brain’s neural networks. A central question is how synapses, with access to only local information about the network, can still organize collectively and perform circuit-wide learning in an efficient manner. In single-layered and all-to-all connected neural networks, local plasticity has been shown to implement gradient-based learning on a class of cost functions that contain a term that aligns the similarity of outputs to the similarity of inputs. Whether such cost functions exist for networks with other architectures is not known. In this paper, we introduce structured and deep similarity matching cost functions, and show how they can be optimized in a gradient-based manner by neural networks with local learning rules. These networks extend F\"oldiak’s Hebbian/Anti-Hebbian network to deep architectures and structured feedforward, lateral and feedback connections. Credit assignment problem is solved elegantly by a factorization of the dual learning objective to synapse specific local objectives. Simulations show that our networks learn meaningful features.
\end{abstract}

\section{Introduction}

End-to-end training of neural networks by gradient-based minimization of a cost function has proved to be a very powerful idea, prompting the question whether the brain is implementing the same strategy. The problem with this proposal is that synapses, the sites of learning in the brain, have access to only local information, i.e. states of the pre- and post-synaptic neurons, and neuromodulators, which may represent, for example, global error signals \cite{schultz1997neural}.  How can synapses calculate from such incomplete information the necessary partial derivatives, which depend on non-local information about other neurons and synapses in the network? Researchers have been tackling this problem by searching for a biologically-plausible implementation of the backpropagation algorithm \cite{rumelhart1986learning}. While significant progress has been made in this domain, see e.g. \cite{xie2003equivalence,lee2015difference,lillicrap2016random,nokland2016direct,scellier2017equilibrium,guerguiev2017towards,whittington2017approximation,sacramento2018dendritic,richards2019dendritic,whittington2019theories}, a fully plausible implementation is not yet available. 

Here we take another approach and focus on networks already operating with biologically-plausible local learning rules. We ask whether one can formulate network-wide learning cost functions for such networks and whether these networks achieve efficient ``credit assignment'' by performing gradient-based learning. Previous work in this area showed that single-layered, all-to-all connected Hebbian/anti-Hebbian networks minimize various versions of similarity matching cost functions \cite{pehlevan2015hebbian,pehlevan2018similarity,pehlevan2019neuroscience}. In this paper, we generalize these results to networks with structured connectivity and deep architectures. 

To achieve our goal, we first introduce a novel class of unsupervised learning objectives that generalize similarity matching \cite{kuang2012symmetric,pehlevan2015hebbian}:  structured and deep similarity matching. This generalization allows for making use of spatial structure in data and hierarchical feature extraction. A parallel can be drawn to the extension of sparse coding \cite{olshausen1996emergence} to structured  \cite{gregor2011structured} and deep sparse coding \cite{he2014unsupervised,kim2018deep,boutin2019meaningful}. 

We show that structured and deep similarity matching can be implemented by a new class of multi-layered neural networks with structured connectivity and biologically-plausible local learning rules. These networks have Hebbian learning in feedforward and feedback connections between different layers, and anti-Hebbian learning in lateral connections within a layer. They generalize F\"oldiak's single-layered, all-to-all connected Hebbian/anti-Hebbian network \cite{foldiak1990forming}. 

We show how efficient credit assignment is achieved in structured and deep Hebbian/anti-Hebbian networks in an elegant way. The network optimizes a dual min-max problem to the structured and deep similarity matching problem. The network-wide dual objective can be factorized into a summation of distributed objectives over each synapse that depend only on local variables to that synapse. Therefore, gradient learning on them leads to local Hebbian and anti-Hebbian learning rules. Previous work showed this result in a single-layered, all-to-all connected Hebbian/anti-Hebbian network \cite{pehlevan2018similarity}. Here, we extend the result to multi-layered architectures with structured connectivity.

The rest of this paper is organized as follows. In Section \ref{review}, we review and extend the results on similarity matching cost functions and their relation to single-layered, all-to-all connected Hebbian/anti-Hebbian networks. In Section \ref{SSM}, we introduce structured similarity matching and in Section \ref{s_DSSM} we extend it to deep architectures. In Section \ref{DSSMN}, we derive  structured and deep Hebbian/anti-Hebbian networks from these cost functions, and show how credit assignment is achieved. We show simulation results in Section \ref{sim} and conclude in Section \ref{conc}.

\section{Similarity matching and gradient-based learning in a Hebbian/anti-Hebbian network}\label{review}

In this section we review the results of \cite{pehlevan2018similarity} on Hebbian/anti-Hebbian neural networks and extend them to general monotonic activation functions. F\"oldiak introduced the Hebbian/anti-Hebbian network as a biologically-plausible, single-layered, competitive unsupervised learning network that forms sparse representations \cite{foldiak1990forming}. Given an input $\x\in\Real^K$, first, the network produces an output, $\br\in\Real^N$, by running the following recurrent dynamics until convergence to a fixed point,
\begin{align}\label{rDyn}
   \tau \frac{d\bu(s)}{ds} &= - \bu(s)+ \W \x - \left(\bL-\I\right)\br(s), \nonumber \\
   \br(s) &= \f (\bu(s)),
\end{align}
where $f$ is the activation function and $\tau$ is the time constant of the neural dynamics. Given the fixed point output, $\br^*$, the synaptic weights are updated by the following local learning rules,
\begin{align}\label{syn_update}
    \Delta W_{ij} = \eta \left(r_{i}^* x_{j}-W_{ij}\right), \quad
    \Delta L_{ij} = \frac{\eta}{2}\left(r_{i}^* r_{j}^*-L_{ij}\right),
\end{align}
where $\eta$ is the learning rate. $W_{ij}$ updates are Hebbian synaptic plasticity rules with a linear decay term. $L_{ij}$  updates are anti-Hebbian, because of the minus sign in the corresponding term in \eqref{rDyn}, and implement lateral competition. After the synaptic update, the network takes in the next input, and the whole process is repeated.

When activation functions are linear, rectified-linear or shrinkage functions, previous studies \cite{hu2014hebbian,pehlevan2015hebbian,pehlevan2018similarity} showed that the learning rules of this network can be interpreted as a stochastic gradient-based  optimization of a network-wide learning objective called similarity matching. Our first contribution is generalizing this result to any monotonic activation function by introducing suitable regularizers and constraints to the optimization problem.

Similarity matching is formally defined as follows. Given $T$ inputs, $\x_1,\ldots,\x_T \in \Real^K$, and outputs, $\br_1,\ldots,\br_T \in \Real^N$, similarity matching learns a representation where pairwise input dot products, or similarities, are preserved subject to regularization and lower and upper bounds on outputs: 
\begin{align}\label{eq_nsm}
    &\min_{\br_1,\ldots,\br_T} \frac 1{2T^2}\sum_{t=1}^T\sum_{t'=1}^T \left(\x_t \cdot \x_{t'} - \br_t \cdot \br_{t'}\right)^2 + \frac {2}{T} \sum_{t=1}^T \left\Vert \F(\br_t)\right\Vert_1, \nonumber \\ &\st \quad a \leq \br_t \leq b,\qquad t=1,\ldots,T. 
\end{align}
Here, the bounds and the regularization function act elementwise.

To see how the Hebbian/Anti-Hebbian network relates to \eqref{eq_nsm}, following the method of \cite{pehlevan2018similarity}, we expand the square in \eqref{eq_nsm} and introduce new auxiliary variables $\W\in\Real^{N\times K}$ and $\bL \in \Real^{N\times N}$ (which will be related to the corresponding variables in \eqref{rDyn} and \eqref{syn_update} shortly) using the identities
 \begin{align}
     -\frac 1{T^2} \sum_{t} \sum_{t'}  \x_t^\top \x_{t'}\br_{t}^\top \br_{t'} &=\min_{\W\in\Real^{N\times K}} -\frac 2T \sum_{t}\x_t^\top \W^\top \br_t + \Tr \, \W^\top \W, \label{duality} \\
     \frac 1{2T^2} \sum_{t} \sum_{t'}\left(\br_t^\top \br_{t'}\right)^2  &= \max_{\bL \in \Real^{N\times N}} \frac 1T \sum_{t}\br_t^\top \bL\br_t - \frac 12 \Tr\, \bL^\top \bL. \label{duality2}
 \end{align}
The first line arises from the cross-term in \eqref{eq_nsm}, and aligns the similarities in the input to the output. The second line creates diversity in the representation. Plugging these into \eqref{eq_nsm} and exchanging the orders of optimization, we arrive at a dual min-max formulation of similarity matching \cite{pehlevan2018similarity}:
\begin{align}\label{eq_nsm_stoch}
    \min_{\W\in\Real^{N\times K}} \max_{\bL \in \Real^{N\times N}} \frac 1T \sum_{t=1}^T l_t(\W,\bL,\x_t),
\end{align}
where
\begin{align}\label{lt}
    l_t &:= \Tr \, \W^\top \W - \frac 12 \Tr \, \bL^\top \bL + \min_{\br_t}\left(-2\br_t^\top \W \x_t + \br_t^\top \bL \br_t + 2\left\Vert  \F(\br_t)\right\Vert_1 \right).
\end{align}
%

The Hebbian/anti-Hebbian network, defined by equations \eqref{rDyn} and \eqref{syn_update}, can be interpreted as a stochastic alternating optimization \cite{olshausen1996emergence} of the new objective \eqref{eq_nsm_stoch}. The algorithms performs two steps for each input, $\x_t$. 

In the first step, the algorithm minimizes $l_t$ with respect to $\br_t$ by running the neural dynamics \eqref{rDyn} until convergence. Minimization is achieved because the argument of $\min$ in \eqref{lt}, $E = -2\br_t^\top \W \x_t + \br_t^\top \bL \br_t + 2\left\Vert \F(\br_t)\right\Vert_1$, decreases with the neural dynamics \eqref{rDyn}, if, within the bounds on the output, the regularizer is related to the neural activation function as: 
\begin{align}\label{Ff}
    F'(r) &= u-r, \quad {\rm where} \,\, r = f(u).
\end{align}
The lower and upper bounds $a$ and $b$ are the infimum and supremum of the range of $f$ respectively. We prove a more general version of this result in Proposition \ref{prop1} in Appendix \ref{Aproof}. See \cite{hertz1991santa} for other possible neural dynamical systems for $l_t$ minimization.

The following are some examples of the relation \eqref{Ff}. The capped rectified linear activation function $f(u) = \min(\max(u-\lambda,0 ),b)$ corresponds to a regularization $F(r)=  \lambda r + {\rm constant}$ with optimization lower and upper bounds $a=0$ and $b$. When $\lambda = 0$, $a=0$, $b = \infty$, we recover nonnegative similarity matching \cite{pehlevan2014hebbian}. When $F(r) = {\rm constant}$,  $a = -\infty$ and $b = \infty$, $f(u)=u$ and we recover the principal subspace network of \cite{pehlevan2018similarity}. For other examples of regularizers see \cite{rozell2008sparse}.

In the second step of the algorithm, synaptic weights are updated by gradient descent-ascent. Given the optimal network output, $\br_t^*$, $\W$ and $\bL$ dependent terms in $l_t$ can be written as a distributed summation of local objectives over synapses \cite{pehlevan2018similarity}:
\begin{align}\label{lca}
 \sum_{i=1}^N \sum_{j=1}^K \left(-2W_{ij}r^*_{t,i}x_{t,j}+W_{ij}^2\right)+\sum_{i=1}^N\sum_{j=1}^N\left(L_{ij}r^*_{t,i}r^*_{t,j}-\frac 12 L_{ij}^2\right).
\end{align}
This form explicitly shows how the credit assignment problem is solved in an elegant way. In (\ref{lca}) each term in the summation depends on only local variables to that synapse,  gradient descent on $\W$ and gradient ascent in $\bL$ results in the local updates given in \eqref{syn_update}.

\section{Structured similarity matching}\label{SSM}

The derivation given in the previous section suggests a generalization of similarity matching in a way that the corresponding Hebbian/anti-Hebbian network has structured connectivity. A close look at equations \eqref{lca} and \eqref{duality} reveals that if we modify the left hand side of \eqref{duality}, the input-output similarity alignment term, to
\begin{align}
    -\frac 1{T^2} \sum_{i=1}^K \sum_{j=1}^N \sum_{t=1}^T\sum_{t'=1}^Tx_{t,i}x_{t',i}r_{t,j}r_{t',j}c^W_{ij},
\end{align}
where $c^W_{ij} \geq 0$ are constants that set the structure, one can still go through the  argument in the previous section and arrive at a modified version \eqref{lca} where the global objective is still factorized into local objectives. We will do that explicitly in Section \ref{DSSMN}. A similar modification can be done for the left hand side of \eqref{duality2} by introducing $c^L_{ij} \geq 0$, to arrive at the full structured similarity matching (SSM) cost function
\begin{align}\label{ssm_cost}
       \min_{\substack{a\leq \br_t \leq b, \\ t = 1,\ldots,T } }  \frac 1{T^2}\sum_{t=1}^T\sum_{t'=1}^T\left(-\sum_{i,j} x_{t,i}x_{t',i}r_{t,j}r_{t',j}c^W_{ij} + \frac 12 \sum_{i,j} r_{t,i}r_{t',i}r_{t,j}r_{t',j}c^L_{ij}\right) + \frac 2T \sum_{t=1}^T \left\Vert \F(\br_t)\right\Vert_1,
\end{align}
where we dropped terms that depend only on the input. 

Through the choice of $c^W_{ij}$ and $c^L_{ij}$, one can design many topologies for the input-output and output-output interactions. A simple way to choose structure constants is $c^W_{ij} \in \lbrace 0, 1 \rbrace$ and $c^L_{ij} \in \lbrace 0, 1 \rbrace$. Setting $c^W_{ij}=0$ will remove any direct interaction between the $i^{\rm th}$ input and $j^{\rm th}$ output channels, and $c^L_{ij}=0$ will do the same thing for the corresponding outputs. One can anticipate that such structured similarity matching can be learned by a Hebbian/anti-Hebbian network with corresponding connections removed. We will show this explicitly later in Section \ref{DSSMN}. Other choices of structure constants assign different weights to particular input-output and output-output interactions. A useful architecture for image processing is the locally connected structure, shown in Figure\,\ref{fig:struct}A. It is interesting to note that structured lateral inhibition was also used in \cite{gregor2011structured} for structured sparse coding.

\begin{figure}[b]
    \centering
    \includegraphics[width = \textwidth]{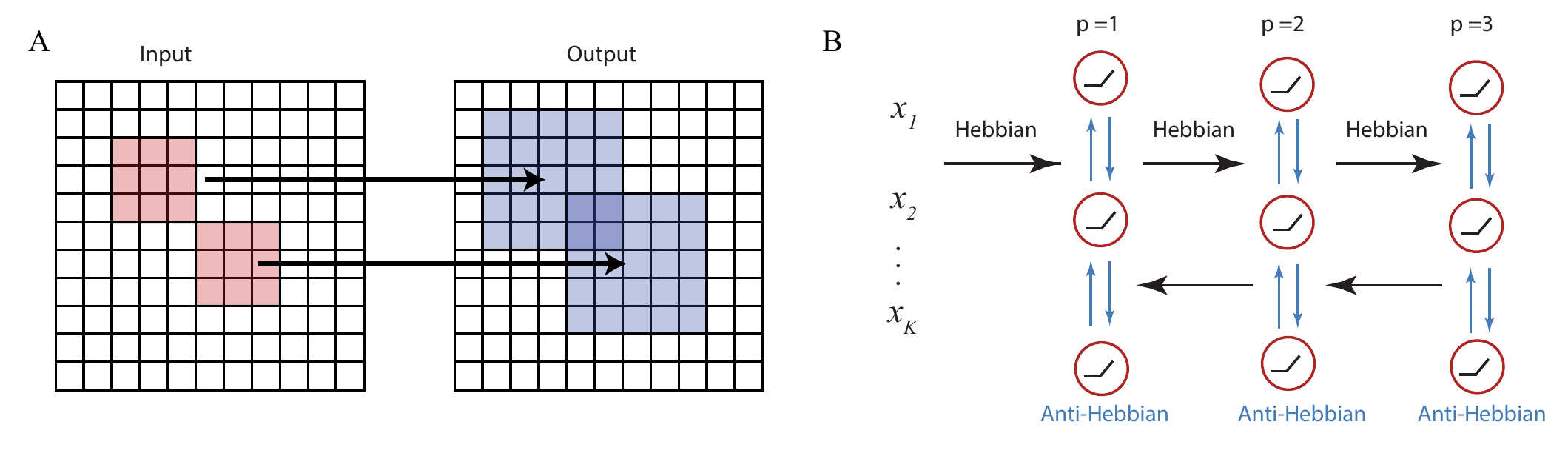}
    \caption{A) Locally connected similarity matching and structured Hebbian/anti-Hebbian network. Choosing the constants ${\bf c}^W$ and ${\bf c}^L$ suitably, one can introduce structured interactions between inputs and outputs. In this example, we assume inputs $\x$ and outputs $\br$ both live on a 2-dimensional grid. Each output neuron takes input from a small portion of the input grid (red shades) and receives lateral interactions from a subset of the output (blue shades). The corresponding structured Hebbian/anti-Hebbian neural network has the same architecture with connectivity defined by the interactions. Feedfordward connections are learned with Hebbian learning rules and lateral connections with anti-Hebbian rules. B) Deep similarity matching and deep Hebbian/anti-Hebbian network. Arrows illustrate synaptic connections. One can introduce structure for all components of the connectivity.}
    \label{fig:struct}
\end{figure}

\section{Structured and deep similarity matching}\label{s_DSSM}

Next, we generalize structured similarity matching to multi-layer processing. To illustrate the main idea, we first focus on generalizing the original similarity matching objective to multiple layers, and bring in the structure constants later. 

We think of a series of similarity matching operations, each applied to the output of the previous layer. For notational convenience, we set $\br^{(0)}_t := \x_t$ and $N^{(0)}:=K$, and define deep similarity matching with $P$ layers as:
\begin{align}\label{p_sm}
    \min_{\substack{ a\leq \br_t^{(p)}\leq b \\ t=1,\ldots T, \\ p = 1,\ldots,P}} \sum_{p=1}^P\frac {\gamma^{p-P}}{2T^2}\sum_{t=1}^T\sum_{t'=1}^T \left(\br_t^{(p-1)} \cdot \br_{t'}^{(p-1)} - \br_t^{(p)} \cdot \br_{t'}^{(p)}\right)^2 +  \sum_{p=1}^P\frac{2\gamma^{p-P}} T \sum_{t=1}^T \left\Vert \F\Big(\br_t^{(p)}\Big)\right\Vert_1,
\end{align}
where $\gamma\geq 0$ is a parameter and $\br_t^{(p)}\in\Real^{N^{(p)}}$. The $\gamma = 0$ limit corresponds to each layer acting independently on the output of the previous layer. The more interesting case is that of finite $\gamma$, which allows influence of later layers on previous layers. Small $\gamma$ emphasizes costs of earlier layers. In spirit, this construction is similar to deep sparse coding with feedback \cite{kim2018deep,boutin2019meaningful}. 

One can intuit the kind of network that will optimize the deep similarity matching cost, which we will present a full derivation of in the next section. This will be a multi-layer network with Hebbian learning across layers and anti-Hebbian learning within layers, Figure \ref{fig:struct}B. An interesting consequence of the finite $\gamma$ coupling between layers will be the existence of feedback connections.  When $\gamma = 0$, we obtain a network without any feedback.  Previously, Bahroun  {\it et al.} \cite{bahroun2017online} implemented a two-layered similarity matching network without any feedback. This network used biologically-implausible weight sharing by tiling the image plane with identical all-to-all connected networks, and thus is different from our approach.

Different layers in \eqref{p_sm} have different representations due to regularization terms and possible changes in dimensionality. To make the framework stronger and allow better hierarchical feature extraction, 
%
%
%
we introduce nonnegative structure constants $c^{W,{(p)}}_{ij}$ and $c^{L,{(p)}}_{ij}$ to each layer and arrive at the structured and deep similarity matching cost function:
\begin{align}\label{DSSM}
       &\min_{\substack{a\leq \br_t^{(p)}\leq b\\ t=1,\ldots T, \\ p = 1,\ldots,P}}  \sum_{p=1}^P \frac {\gamma^{p-P}}{T^2}\sum_{t=1}^T\sum_{t'=1}^T\left(-\sum_{i=1}^{N^{(p-1)}} \sum_{j=1}^{N^{(p)}} r_{t,i}^{(p-1)}r_{t',i}^{(p-1)}r_{t,j}^{(p)}r_{t',j}^{(p)}c^{W,{(p)}}_{ij} \right. \nonumber \\ 
       &\qquad   \left. + \frac{\left(1+\gamma(1- \delta_{pP})\right)}{2}\sum_{i=1}^{N^{(p)}} \sum_{j=1}^{N^{(p)}} r^{(p)}_{t,i}r^{(p)}_{t',i}r^{(p)}_{t,j}r^{(p)}_{t',j}c^{L,{(p)}}_{ij}\right) +  \sum_{p=1}^P\frac{2\gamma^{p-P}} T \sum_{t=1}^T \left\Vert \F\Big(\br_t^{(p)}\Big)\right\Vert_1,
\end{align}
where $\delta_{pP}$ is the Kronecker delta. For images, neurobiology suggests choosing the structure constants so that the sizes of receptive fields increase across layers \cite{freeman2011metamers}.


\section{Structured and deep similarity matching via structured and deep Hebbian/Anti-Hebbian neural networks}\label{DSSMN}

Next, we derive the network that minimizes the structured and deep similarity matching cost \eqref{DSSM}. We show how credit assignment in this network is solved by explicitly factorizing the dual of the network-wide cost \eqref{DSSM} to local synaptic objectives.

Our derivation uses the methods reviewed in Section \ref{review}. For each layer, we introduce dual variables $W^{(p)}_{ij}$ and $L^{(p)}_{ij}$ for interactions with positive structure constants, define variables
\begin{align}
    \bar W_{ij}^{(p)} &= \left\lbrace \begin{array}{ll} W_{ij}^{(p)}, &  c^{W,{(p)}}_{ij} \neq 0 \\[0.1in] 0, & c^{W,{(p)}}_{ij} = 0\end{array} \right., \qquad 
    \bar L_{ij}^{(p)} = \left\lbrace \begin{array}{ll} L_{ij}^{(p)}, &  c^{L,{(p)}}_{ij} \neq 0 \\[0.1in] 0, & c^{L,{(p)}}_{ij} = 0\end{array} \right.,
\end{align}
for notational convenience, and rewrite \eqref{DSSM} as
\begin{align}
\min_{\bar\W^{(1)},\ldots, \bar\W^{(p)}}\max_{\bar\bL^{(1)},\ldots,\bar \bL^{(p)}} &\frac 1T \sum_{t=1}^T l_t\left(\bar \W^{(1)},\ldots, \bar \W^{(p)},\bar \bL^{(1)},\ldots,\bar \bL^{(p)},\br^{(0)}_t\right),
\end{align}
where
\begin{align}\label{multil}
l_t : = &\sum_{p=1}^P \sum_{\substack{i,j \\ c^{W,{(p)}}_{ij} \neq 0}} \frac {\gamma^{p-P}}{c^{W,{(p)}}_{ij}}{W^{(p)}_{ij}}^2 - \sum_{p=1}^P \sum_{{\substack{i,j \\ c^{L,{(p)}}_{ij} \neq 0}}} \frac {\gamma^{p-P}}{2\left(1+\gamma(1- \delta_{pP})\right)c^{L,{(p)}}_{ij}}{L^{(p)}_{ij}}^2 \nonumber \\
&+\min_{\substack{ a\leq \br_t^{(p)}  \leq b \\ p = 1,\ldots,P}}\sum_{p=1}^P\gamma^{p-P}\left(- 2{\br_{t}^{(p)}}^{\top}{\bar \W^{(p)}}\br_{t}^{(p-1)}+{\br_{t}^{(p)}}^{\top}{\bar \bL^{(p)}}\br_{t}^{(p)}+2\left\Vert \F\Big(\br_t^{(p)}\Big)\right\Vert_1\right),
\end{align}

This new optimization problem can be solved in a stochastic manner, by taking gradients of $l_t$ with respect to $W^{(p)}_{ij}$ and $L^{(p)}_{ij}$, for optimal values of $\br^{(p)}_t$. This procedure is akin to the alternating optimization of sparse coding \cite{olshausen1996emergence,arora2015simple}. We will describe each of these alternating steps separately.

\subsection{Neural dynamics}

Proposition \ref{prop1} given in Appendix \ref{Aproof} shows that the minimization of the second line of \eqref{multil} can be performed by running the following neural network dynamics until convergence to a fixed point,
\begin{align}\label{multid}
   \tau \frac{d\bu^{(p)}}{ds} &= -\bu^{(p)} +\bar \W^{(p)} \br^{(p-1)}  - \left(\bar \bL^{(p)} -\I\right)\br^{(p)}+ (1-\delta_{pP})\gamma {{\bar \W}^{(p+1)}}{}^\top\br^{(p+1)}, \nonumber \\
   \br^{(p)}&= \f(\bu^{(p)}), \quad p = 1,\ldots, P.
\end{align}
where we dropped the $t$ subscript for notational clarity and set $\br^{(0)} = \x_t$. As promised, the $\gamma$ parameter sets the strength of feedback connections in the network. When $\gamma=0$, information in the network flows bottom up only. In practice waiting until convergence may not be necessary \cite{minden2018biologically}.

\subsection{Gradient-based learning and local learning rules}

With the optimal values of the neural dynamics, the network-wide objective  factorizes into local synaptic objectives, providing an explicit solution to the credit assignment problem:
\begin{align}
    l_t = &\sum_{p=1}^P \sum_{\substack{i,j \\ c^{W,{(p)}}_{ij} \neq 0}} \left(-2W^{(p)}_{ij}{r^{(p)}_j}^*{r^{(p-1)}_i}^*+\frac {\gamma^{p-P}}{c^{W,{(p)}}_{ij}}{W^{(p)}_{ij}}^2\right) \nonumber \\
    &- \sum_{p=1}^P \sum_{{\substack{i,j \\ c^{L,{(p)}}_{ij} \neq 0}}} \left(-L^{(p)}_{ij}{r^{(p)}_j}^*{r^{(p)}_i}^* + \frac {\gamma^{p-P}}{2\left(1+\gamma(1- \delta_{pP})\right)c^{L,{(p)}}_{ij}}{L^{(p)}_{ij}}^2 \right).
\end{align}
Local learning rules are derived from the above equation by taking derivatives:
\begin{align}\label{rules}
 \Delta W^{(p)}_{ij} &= \eta \gamma^{p-P} \left({r^{(p)}_j}^*{r^{(p-1)}_i}^*-\frac{W^{(p)}_{ij}}{c^{W,{(p)}}_{ij}}\right), &\nonumber \\
 \Delta  L^{(p)}_{ij} &= \frac{\eta}2\gamma^{p-P} \left({r^{(p)}_j}^*{r^{(p)}_i}^*-\frac {L^{(p)}_{ij}}{\left(1+\gamma(1- \delta_{pP})\right)c^{L,{(p)}}_{ij}}\right).
\end{align}
These rules are Hebbian between layers and anti-Hebbian within layers, Figure \ref{fig:struct}. One can absorb the $\gamma$ factors into the learning rates and choose different rates for different layers for better performance.

Equations \eqref{multid} and \eqref{rules} define the structured and deep Hebbian/anti-Hebbian neural network, Figure \ref{fig:struct}B. It operates by running the multi-layered dynamics \eqref{multid} for each input, and performing the updates \eqref{rules} before seeing the next input.

\section{Simulations}\label{sim}

Next, we illustrate the performance of the structured and deep similarity matching networks in various datasets.

\subsection{Illustrative example}
We start by introducing a toy example that illustrates the operational principles of the structured and deep Hebbian/anti-Hebbian neural network. We trained a two-layer network with the following architecture: the first layer is composed of two separate networks of 10 neurons each, while the second layer is composed of a single network of 10 neurons. The second layer is connected to both first layer networks with feedforward and feedback ($\gamma = 0.8$) connections, Figure \ref{fig:illust}. Inputs to the network are clustered into two groups, and are drawn randomly from 100-dimensional Gaussian distributions. Representational similarity of these patterns are shown in Figure \ref{fig:illust}. The Gaussian distributions were chosen separately for each cluster and first layer network. Neural activation functions were $f(a) = \max (\min(a,1),0)$. We used a regularized version of the similarity matching cost \cite{sengupta2018manifold} to enforce pattern decorrelation in the first and second layers. This regularization does not change the biological plausibility of our networks, just adds homeostatic plasticity rules \cite{sengupta2018manifold,pehlevan2019spiking}. During training, the structured and deep similarity matching cost consistently decreased \ref{fig:illust}C. The networks learned decorrelated representations in the first and second layers, \ref{fig:illust}A. 

Next, we performed a perturbation that elucidates the role of feedback. We kept the input distribution to one of the first layer networks (top in \ref{fig:illust}B) as is, but changed the inputs to the other first layer network (bottom \ref{fig:illust}B). The new patterns were nearly equally similar to the original clusters of patterns (bottom \ref{fig:illust}B). Even though the cluster identity of these new patterns were ambiguous, the bottom network clustered them to the first or second cluster, depending on the identity of inputs to the top network. This decision was mediated by the feedback connections from the second layer. Therefore, while the anti-Hebbian connections within a layer were performing competitive learning and pattern separation, the Hebbian connections between layers were creating a predictive, pattern completing pathway between the hierarchical representations across layers.

\begin{figure}
     \centering
     \includegraphics[width=\textwidth]{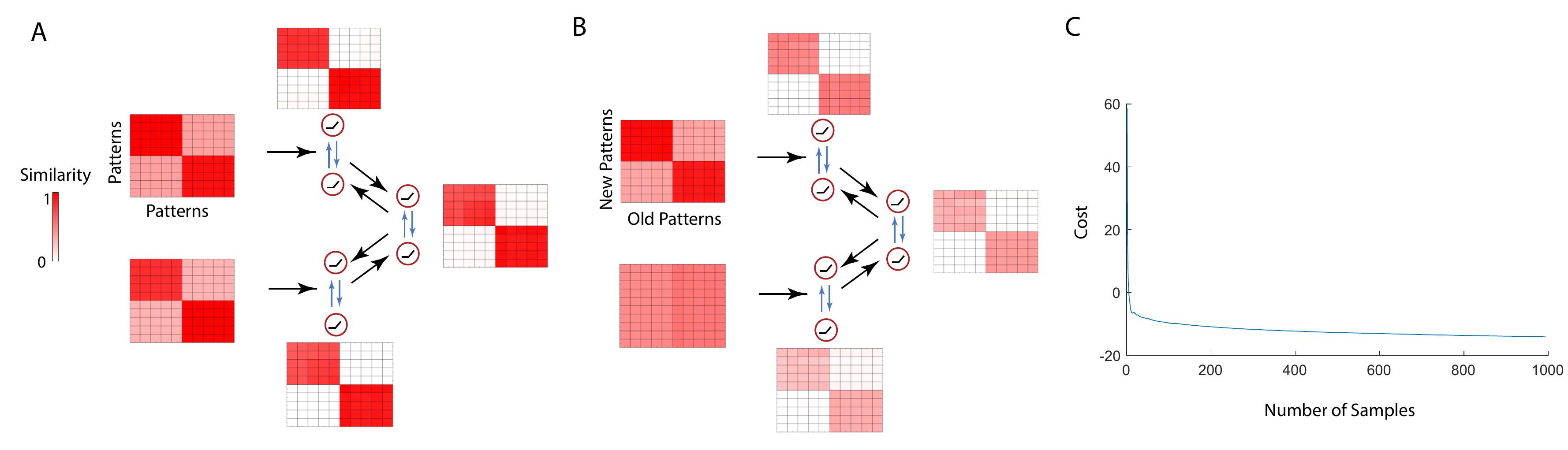}
     \caption{A two-layer Hebbian/anti-Hebbian network with feedback.  For each subnetwok, representational similarity matrices are shown for 10 example patterns, 5 from each of the two clusters. Similarities are calculated by taking pairwise dot products of patterns and normalizing the largest dot product to 1. A) Network simulated with patterns from a set generated from the same distribution as the training set. B) Network simulated with patterns to the bottom first layer generated from a different distribution. C) Structured and deep similarity matching cost decreases over training.}.
    \label{fig:illust}
\end{figure}

\subsection{Faces dataset}

We trained a 3-layer, locally connected Hebbian/anti-Hebbaian neural network  with examples from the ``labeled faces in the wild" dataset \cite{learned2016labeled}, Figure \ref{fig:faces}. Images in this dataset have dimensions $64^2$. We organized the neurons into square grids in each layer, with strides 2, 4 and 8 respectively in first, second and third layer. Thus, there were 1024, 256 and 64 neurons in respective layers. A neuron was connected to a neuron in the previous layer if the Euclidean distance between its grid location and the previous layer neuron's grid location was less than or equal to  8 for the first layer, 12 for the second layer and 24 for the third layer. Lateral connections were again based on Euclidean distances with the same parameters. We trained with different $\gamma$ values, shown are features for $\gamma = 0.01$. Figure \ref{fig:faces} shows the learned features. Neural activation functions were $f(a) = \max (\min(a,1),0)$. We see that the network learns diverse localized features in the first layer, and combines them in the second and third layers to larger scale features.

\begin{figure}
    \centering
    \includegraphics[width=\textwidth]{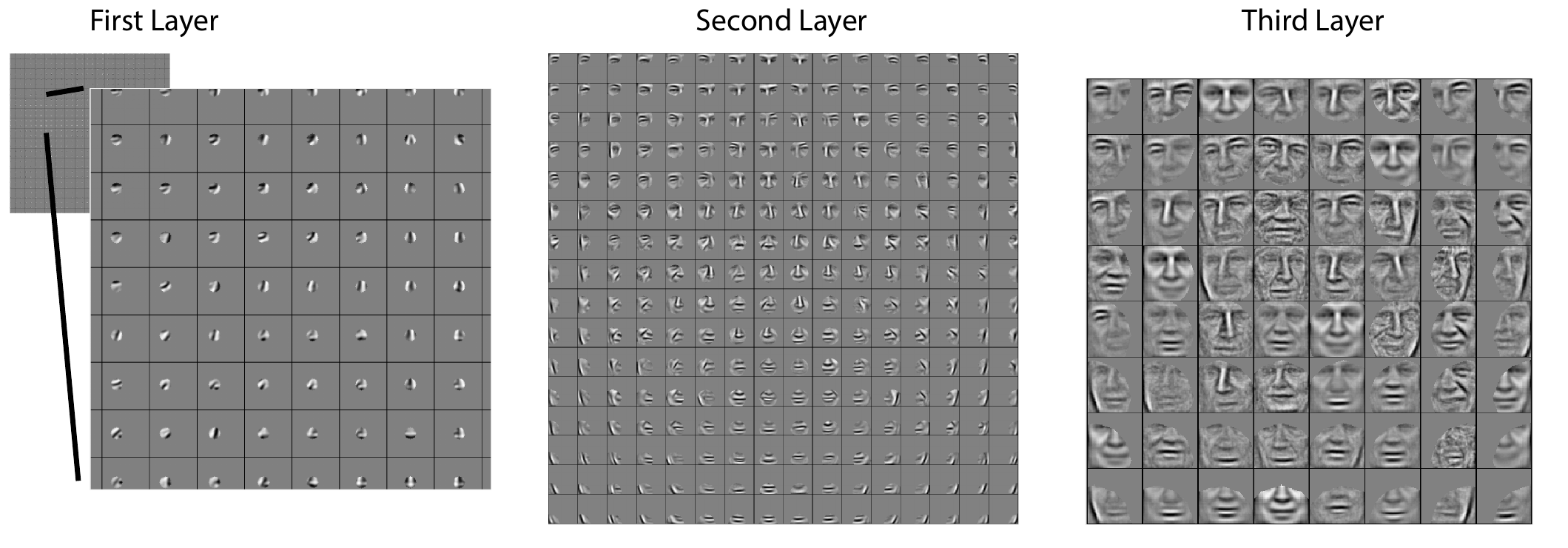}
    \caption{Features learned by a 3-layer, locally connected Hebbian/anti-Hebbian neural network on the labeled faces in the wild dataset \cite{learned2016labeled}. Features are calculated by reverse correlation on the dataset, and masking these features to keep only the portions of the dataset which elicits a response in the neuron. We zoom to a selected subset of features for the first layer on the left.}
    \label{fig:faces}
\end{figure}

\subsection{Classifying hand-written digits}

We next tested if the features learned by our networks are useful for classification tasks. We trained a single-layer structured similarity matching network on the MNIST data set, with each image preprocessed by mean subtraction. We used the locally-connected structure shown in Figure \ref{fig:struct}. Network had a stride 2 and each neuron received input from a patch of radius $r_\text{o}=4$. Neurons belonging to the same site had inhibitory recurrent connections. We used hyperbolic tangent activation function (tanh(x)).  Classification was done using scikitlearn library's LinearSVC with default parameters.
Table~\ref{Tab:MNIST_SingleLayer} shows  classification error as a function of number of neurons per site (NPS). When compared to other networks with biologically-plausible training (1.46\% in \cite{krotov2019unsupervised}), our network achieves on-par performance on this dataset. 

\begin{table}
\begin{center}
\begin{adjustbox}{max width=\textwidth}
\begin{tabular} {| c | c | c | c| c |c | c | }
 \hline
 NPS   & 4     & 8    & 16  & 32  & 64   & 100 \\
 \hline
 Classification error (\%) & 3.87 & 2.41 & 1.73 & 1.60 & 1.47 & 1.40 \\ 
 \hline
\end{tabular}
\end{adjustbox}
\caption{ Classification on MNIST data set: we show how the test error decreases as the number of  neurons per site (NPS) increases. }
\label{Tab:MNIST_SingleLayer}
\end{center}
\end{table}

\section{Discussion and conclusion}\label{conc}

We introduced a new class of unsupervised learning cost functions, structured and deep similarity matching, and showed how they can efficiently be minimized via a new class of neural networks: structured and deep Hebbian/anti-Hebbian networks. These networks generalize F\"oldiak's single layer, all-to-all connected network \cite{foldiak1990forming}. 

Even though we introduced depth separately from structure within a layer, they are actually related. The structured and deep cost function in \eqref{DSSM} can be obtained from the structured cost function \eqref{ssm_cost} by allowing structure constants to be negative, and choosing them and regularizers suitably. Our framework can be used to introduce other architectures, e.g. ones including skip connections.

The credit assignment problem in our networks is solved in an efficient manner. Through a duality transform \cite{pehlevan2018similarity}, we showed how the dual min-max objective is factorized into distributed objectives over synapses, that depend only on variables local to that synapse. Therefore, each synapse can  be updated by biologically-plausible local learning rules and yet the global objective can be optimized in a gradient-based manner.

There are two possible ``weight transport problems'' \cite{grossberg1987competitive} in our networks: 1) feedback connections are transposes of feedforward connections, and 2) lateral connections are symmetric. A straight-forward and biologically-plausible solution to these problems exist: symmetric weights can be learned asymptotically by the local learning rules in \eqref{rules}, even when the weights are initialized differently. A similar solution was proposed in \cite{kolen1994backpropagation} to the weight transport problem in the backpropagation algorithm. Other solutions, including random feedback weights \cite{lillicrap2016random}, may be possible.

\subsubsection*{Acknowledgments}

We thank Alper Erdogan and Blake Bordelon for discussions. This work was supported by a gift from the Intel Corporation.






\begin{small}
\bibliographystyle{unsrt}
\bibliography{NSM}

\begin{thebibliography}{10}

\bibitem{schultz1997neural}
Wolfram Schultz, Peter Dayan, and P~Read Montague.
\newblock A neural substrate of prediction and reward.
\newblock {\em Science}, 275(5306):1593--1599, 1997.

\bibitem{rumelhart1986learning}
David~E Rumelhart, Geoffrey~E Hinton, and Ronald~J Williams.
\newblock Learning representations by back-propagating errors.
\newblock {\em Nature}, 323(6088):533--536, 1986.

\bibitem{xie2003equivalence}
Xiaohui Xie and H~Sebastian Seung.
\newblock Equivalence of backpropagation and contrastive hebbian learning in a
  layered network.
\newblock {\em Neural Computation}, 15(2):441--454, 2003.

\bibitem{lee2015difference}
Dong-Hyun Lee, Saizheng Zhang, Asja Fischer, and Yoshua Bengio.
\newblock Difference target propagation.
\newblock In {\em Joint European Conference on Machine Learning and Knowledge
  Discovery in Databases}, pages 498--515. Springer, 2015.

\bibitem{lillicrap2016random}
Timothy~P Lillicrap, Daniel Cownden, Douglas~B Tweed, and Colin~J Akerman.
\newblock Random synaptic feedback weights support error backpropagation for
  deep learning.
\newblock {\em Nature Communications}, 7:13276, 2016.

\bibitem{nokland2016direct}
Arild N{\o}kland.
\newblock Direct feedback alignment provides learning in deep neural networks.
\newblock In {\em Advances in Neural Information Processing Systems}, pages
  1037--1045, 2016.

\bibitem{scellier2017equilibrium}
Benjamin Scellier and Yoshua Bengio.
\newblock Equilibrium propagation: Bridging the gap between energy-based models
  and backpropagation.
\newblock {\em Frontiers in Computational Neuroscience}, 11:24, 2017.

\bibitem{guerguiev2017towards}
Jordan Guerguiev, Timothy~P Lillicrap, and Blake~A Richards.
\newblock Towards deep learning with segregated dendrites.
\newblock {\em ELife}, 6:e22901, 2017.

\bibitem{whittington2017approximation}
James~CR Whittington and Rafal Bogacz.
\newblock An approximation of the error backpropagation algorithm in a
  predictive coding network with local hebbian synaptic plasticity.
\newblock {\em Neural Computation}, 29(5):1229--1262, 2017.

\bibitem{sacramento2018dendritic}
Jo{\~a}o Sacramento, Rui~Ponte Costa, Yoshua Bengio, and Walter Senn.
\newblock Dendritic cortical microcircuits approximate the backpropagation
  algorithm.
\newblock In {\em Advances in Neural Information Processing Systems}, pages
  8721--8732, 2018.

\bibitem{richards2019dendritic}
Blake~A Richards and Timothy~P Lillicrap.
\newblock Dendritic solutions to the credit assignment problem.
\newblock {\em Current Opinion in Neurobiology}, 54:28--36, 2019.

\bibitem{whittington2019theories}
James~CR Whittington and Rafal Bogacz.
\newblock Theories of error back-propagation in the brain.
\newblock {\em Trends in Cognitive Sciences}, 2019.

\bibitem{pehlevan2015hebbian}
Cengiz Pehlevan, Tao Hu, and Dmitri~B Chklovskii.
\newblock A hebbian/anti-hebbian neural network for linear subspace learning: A
  derivation from multidimensional scaling of streaming data.
\newblock {\em Neural computation}, 27(7):1461--1495, 2015.

\bibitem{pehlevan2018similarity}
Cengiz Pehlevan, Anirvan~M Sengupta, and Dmitri~B Chklovskii.
\newblock Why do similarity matching objectives lead to hebbian/anti-hebbian
  networks?
\newblock {\em Neural computation}, 30(1):84--124, 2018.

\bibitem{pehlevan2019neuroscience}
Cengiz Pehlevan and Dmitri~B Chklovskii.
\newblock Neuroscience-inspired online unsupervised learning algorithms.
\newblock {\em arXiv preprint arXiv:1908.01867}, 2019.

\bibitem{kuang2012symmetric}
Da~Kuang, Chris Ding, and Haesun Park.
\newblock Symmetric nonnegative matrix factorization for graph clustering.
\newblock In {\em Proceedings of the 2012 SIAM International Conference on Data
  Mining}, pages 106--117. SIAM, 2012.

\bibitem{olshausen1996emergence}
Bruno~A Olshausen and David~J Field.
\newblock Emergence of simple-cell receptive field properties by learning a
  sparse code for natural images.
\newblock {\em Nature}, 381(6583):607, 1996.

\bibitem{gregor2011structured}
Karol Gregor, Arthur Szlam, and Yann LeCun.
\newblock Structured sparse coding via lateral inhibition.
\newblock {\em Advances in Neural Information Processing Systems}, 24, 2011.

\bibitem{he2014unsupervised}
Yunlong He, Koray Kavukcuoglu, Yun Wang, Arthur Szlam, and Yanjun Qi.
\newblock Unsupervised feature learning by deep sparse coding.
\newblock In {\em Proceedings of the 2014 SIAM International Conference on Data
  Mining}, pages 902--910. SIAM, 2014.

\bibitem{kim2018deep}
Edward Kim, Darryl Hannan, and Garrett Kenyon.
\newblock Deep sparse coding for invariant multimodal halle berry neurons.
\newblock In {\em Proceedings of the IEEE Conference on Computer Vision and
  Pattern Recognition}, pages 1111--1120, 2018.

\bibitem{boutin2019meaningful}
Victor Boutin, Angelo Franciosini, Franck Ruffier, and Laurent Perrinet.
\newblock Meaningful representations emerge from sparse deep predictive coding.
\newblock {\em arXiv preprint arXiv:1902.07651}, 2019.

\bibitem{foldiak1990forming}
Peter F{\"o}ldiak.
\newblock Forming sparse representations by local anti-hebbian learning.
\newblock {\em Biological Cybernetics}, 64(2):165--170, 1990.

\bibitem{hu2014hebbian}
Tao Hu, Cengiz Pehlevan, and Dmitri~B Chklovskii.
\newblock A hebbian/anti-hebbian network for online sparse dictionary learning
  derived from symmetric matrix factorization.
\newblock In {\em 2014 48th Asilomar Conference on Signals, Systems and
  Computers}, pages 613--619. IEEE, 2014.

\bibitem{hertz1991santa}
J~Hertz, A~Krogh, and RG~Palmer.
\newblock Introduction to the theory of neural computation, 1991.

\bibitem{pehlevan2014hebbian}
Cengiz Pehlevan and Dmitri~B Chklovskii.
\newblock A hebbian/anti-hebbian network derived from online non-negative
  matrix factorization can cluster and discover sparse features.
\newblock In {\em 2014 48th Asilomar Conference on Signals, Systems and
  Computers}, pages 769--775. IEEE, 2014.

\bibitem{rozell2008sparse}
Christopher~J Rozell, Don~H Johnson, Richard~G Baraniuk, and Bruno~A Olshausen.
\newblock Sparse coding via thresholding and local competition in neural
  circuits.
\newblock {\em Neural Computation}, 20(10):2526--2563, 2008.

\bibitem{bahroun2017online}
Yanis Bahroun and Andrea Soltoggio.
\newblock Online representation learning with single and multi-layer hebbian
  networks for image classification.
\newblock In {\em International Conference on Artificial Neural Networks},
  pages 354--363. Springer, 2017.

\bibitem{freeman2011metamers}
Jeremy Freeman and Eero~P Simoncelli.
\newblock Metamers of the ventral stream.
\newblock {\em Nature Neuroscience}, 14(9):1195, 2011.

\bibitem{arora2015simple}
Sanjeev Arora, Rong Ge, Tengyu Ma, and Ankur Moitra.
\newblock Simple, efficient, and neural algorithms for sparse coding.
\newblock In {\em Proceedings of The 28th Conference on Learning Theory}, pages
  113--149, 2015.

\bibitem{minden2018biologically}
Victor Minden, Cengiz Pehlevan, and Dmitri~B Chklovskii.
\newblock Biologically plausible online principal component analysis without
  recurrent neural dynamics.
\newblock In {\em 2018 52nd Asilomar Conference on Signals, Systems, and
  Computers}, pages 104--111. IEEE, 2018.

\bibitem{sengupta2018manifold}
Anirvan Sengupta, Cengiz Pehlevan, Mariano Tepper, Alexander Genkin, and Dmitri
  Chklovskii.
\newblock Manifold-tiling localized receptive fields are optimal in
  similarity-preserving neural networks.
\newblock In {\em Advances in Neural Information Processing Systems}, pages
  7080--7090, 2018.

\bibitem{pehlevan2019spiking}
Cengiz Pehlevan.
\newblock A spiking neural network with local learning rules derived from
  nonnegative similarity matching.
\newblock {\em arXiv preprint arXiv:1902.01429}, 2019.

\bibitem{learned2016labeled}
Erik Learned-Miller, Gary~B Huang, Aruni RoyChowdhury, Haoxiang Li, and Gang
  Hua.
\newblock Labeled faces in the wild: A survey.
\newblock In {\em Advances in Face Detection and Facial Image Analysis}, pages
  189--248. Springer, 2016.

\bibitem{krotov2019unsupervised}
Dmitry Krotov and John~J Hopfield.
\newblock Unsupervised learning by competing hidden units.
\newblock {\em Proceedings of the National Academy of Sciences},
  116(16):7723--7731, 2019.

\bibitem{grossberg1987competitive}
Stephen Grossberg.
\newblock Competitive learning: From interactive activation to adaptive
  resonance.
\newblock {\em Cognitive Science}, 11(1):23--63, 1987.

\bibitem{kolen1994backpropagation}
John~F Kolen and Jordan~B Pollack.
\newblock Backpropagation without weight transport.
\newblock In {\em Proceedings of 1994 IEEE International Conference on Neural
  Networks (ICNN'94)}, volume~3, pages 1375--1380. IEEE, 1994.

\bibitem{hopfield1984neurons}
John~J Hopfield.
\newblock Neurons with graded response have collective computational properties
  like those of two-state neurons.
\newblock {\em Proceedings of the National Academy of Sciences},
  81(10):3088--3092, 1984.

\end{thebibliography}
\end{small}

\newpage

\appendix
\setcounter{page}{1}

\begin{center}
	\begin{Large}
     Supplementary Material
     \end{Large}
\end{center}

\section{Proof of Proposition 1}\label{Aproof}

\begin{proposition}\label{prop1}
Assume $f$ is monotonically increasing and bounded. Define
\begin{align}\label{energyapp}
    E := \sum_{p=1}^P\gamma^{p-P}\left(- 2{\br^{(p)}}^{\top}{\bar \W^{(p)}}\br^{(p-1)}+{\br^{(p)}}^{\top}{\bar \bL^{(p)}}\br^{(p)}+2\left\Vert \F\Big(\br^{(p)}\Big)\right\Vert_1\right).
\end{align}
Then, under the dynamics \eqref{multid}, $E$ is bounded from below and nonincreasing:
\begin{align}
 \frac{dE}{ds} \leq 0   
\end{align}

If $f'(u) > 0$, then 
\begin{align}
 \frac{dE}{ds} = 0   
\end{align}
if and only if at the fixed points.
\end{proposition}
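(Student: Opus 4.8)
The plan is to establish $E$ as a Lyapunov function for the neural dynamics \eqref{multid}. The key observation is that the right-hand side of \eqref{multid} should be $-\tau^{-1}$ times a positive rescaling of the gradient of $E$ with respect to $\br^{(p)}$, so that \eqref{multid} is (a coordinatewise time-reparametrized) gradient flow on $E$ in the $\br$-variables; monotonicity of $E$ then follows from the chain rule together with monotonicity of $f$.

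First I would compute $\partial E/\partial\br^{(p)}$ from \eqref{energyapp}. For a fixed $p$, four groups of terms contribute: the feedforward term $-2\gamma^{p-P}{\br^{(p)}}^\top\bar\W^{(p)}\br^{(p-1)}$, the lateral term $\gamma^{p-P}{\br^{(p)}}^\top\bar\bL^{(p)}\br^{(p)}$ and the regularizer $2\gamma^{p-P}\|\F(\br^{(p)})\|_1$ from the $p$-th summand, plus the term $-2\gamma^{p+1-P}{\br^{(p+1)}}^\top\bar\W^{(p+1)}\br^{(p)}$ from the $(p{+}1)$-st summand, present exactly when $p<P$. Using the symmetry of $\bar\bL^{(p)}$ (only its symmetric part enters the quadratic form, and the update \eqref{rules} preserves symmetry if it holds initially) and the defining identity \eqref{Ff}, $F'\big(r^{(p)}_i\big) = u^{(p)}_i - r^{(p)}_i$ with $r^{(p)}_i = f\big(u^{(p)}_i\big)$, this collects to
\begin{align*}
\frac{1}{2\gamma^{p-P}}\frac{\partial E}{\partial \br^{(p)}} = \bu^{(p)} - \br^{(p)} - \bar\W^{(p)}\br^{(p-1)} + \bar\bL^{(p)}\br^{(p)} - (1-\delta_{pP})\,\gamma\,{\bar\W^{(p+1)}}{}^\top\br^{(p+1)},
\end{align*}
and the right-hand side is precisely $-\tau\,d\bu^{(p)}/ds$ by \eqref{multid}. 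Hence $\tau\,d\bu^{(p)}/ds = -\big(2\gamma^{p-P}\big)^{-1}\,\partial E/\partial\br^{(p)}$ for every $p$.

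With this in hand, the chain rule and $\br^{(p)} = \f\big(\bu^{(p)}\big)$ give
\begin{align*}
\frac{dE}{ds} = \sum_{p=1}^P \Big(\frac{\partial E}{\partial \br^{(p)}}\Big)^{\!\top}\!\frac{d\br^{(p)}}{ds} = \sum_{p=1}^P \Big(\frac{\partial E}{\partial \br^{(p)}}\Big)^{\!\top}\!\diag{f'(\bu^{(p)})}\frac{d\bu^{(p)}}{ds} = -2\tau\sum_{p=1}^P \gamma^{p-P}\sum_{i} f'\big(u^{(p)}_i\big)\Big(\frac{du^{(p)}_i}{ds}\Big)^{\!2},
\end{align*}
which is $\le 0$ since $\tau>0$, the prefactors $\gamma^{p-P}$ are nonnegative, and $f'\ge 0$ by monotonicity of $f$. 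If $f'>0$, then every term in the last sum is nonnegative, so $dE/ds=0$ forces $du^{(p)}_i/ds=0$ for all $i,p$, i.e. a fixed point of \eqref{multid}; the converse is immediate. Boundedness below is also immediate: $E$ depends on the state only through $\br^{(1)},\dots,\br^{(P)}$, and since $f$ is bounded each $\br^{(p)}$ remains in the compact box $[a,b]^{N^{(p)}}$, where $a,b$ are the infimum and supremum of the range of $f$; $E$ is continuous there and hence attains a minimum.

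The main obstacle is not the gradient-flow bookkeeping but handling the $\ell_1$ regularizer and the box constraint rigorously. One must check that along the flow \eqref{Ff} applies componentwise on the range of $f$, that the absolute values in $\|\F(\br^{(p)})\|_1$ may be dropped — which holds after fixing the additive constant left undetermined by $F'(r)=u-r$ so that $F\ge 0$ on $[a,b]$, as in all the examples of interest — and that saturation of $f$ (where $f'=0$ and $\br^{(p)}$ meets the constraint boundary) contributes no boundary term, because those are exactly the directions annihilated by $\diag{f'(\bu^{(p)})}$. Once these points are verified the argument closes; combined with boundedness of trajectories it also yields, by a LaSalle-type argument, convergence to the set of fixed points.
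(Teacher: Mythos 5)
Your proof is correct and takes essentially the same route as the paper's: differentiate $E$ along the flow, use $F'(r)=u-r$ to identify $\partial E/\partial\br^{(p)}$ with $-2\gamma^{p-P}\tau\,d\bu^{(p)}/ds$ from the dynamics, and conclude $dE/ds=-2\tau\sum_p\gamma^{p-P}\sum_i f'\big(u_i^{(p)}\big)\big(du_i^{(p)}/ds\big)^2\le 0$, with boundedness below from boundedness of $f$ and the fixed-point characterization when $f'>0$. Your prefactor $-2\tau$ is actually the correct one (the paper's intermediate $-2/\tau$ is a harmless slip), and your added remarks on the additive constant in $F$ and on saturation just make the same Lyapunov argument slightly more explicit.
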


\begin{proof}
By chain rule
\begin{align}
    &\frac {dE}{ds} = \sum_{p=1}^{P}\frac{\partial E}{\partial \br^{(p)}} \frac {d \br^{(p)}}{ds} \nonumber \\
    &=-2\sum_{p=1}^{P} \gamma^{p-P}\left[{\bar\W}^{(p)}\br^{(p-1)}-{\bar \bL}^{(p)}\br^{(p)}+\gamma(1-\delta_{p,P}) { {\bar\W}^{(p+1)} }{}^\top \br^{(p+1)}-\F'\left(\br^{(p)}\right) \right]\cdot \frac {d \br^{(p)}}{ds} \nonumber \\
    &=-\frac 2{\tau}\sum_{p=1}^{P} \gamma^{p-P}\frac{d\bu^{(p)}}{ds}\cdot \frac {d \br^{(p)}}{ds} = -2\sum_{p=1}^{P} \gamma^{p-P}\sum_{i=1}^{N^{(p)}}\left(\frac{du^{(p)}_i}{ds}\right)^2 f'(u_i^{p}) \nonumber \\
    &\leq 0,
\end{align}
where the last inequality holds because $f$ is monotonically increasing. $E$ is nonincreasing and bounded from below because $f$ are bounded. If $f'(u)> 0$, then $E$ is stationary if and only if at the fixed points and therefore $E$ is a Lyapunov function.

Similar proofs were given in e.g. \cite{hopfield1984neurons,xie2003equivalence,rozell2008sparse}. 
\end{proof}

\begin{note}\label{note1}
When $P=1$, for the all-to-all connected Hebbian/anti-Hebbian network of Section \ref{review}, the activation function does not need to be bounded for $E$ to be lower bounded, if $\bL$ is initialized positive definite. The learning rules \eqref{syn_update} preserve positive definiteness of $\bL$.
\end{note}

The energy function \eqref{energyapp} and the corresponding dynamics \eqref{multid} resembles the ones used in Xie and Seung’s Contrastive Hebbian Learning (CHL) \cite{xie2003equivalence}. We want to take the opportunity to discuss the differences between the two approaches. Most importantly, CHL optimizes an error defined at the output, and therefore has to (back)propagate the error by feedback connections. In deep similarity matching error is defined as a function of all neurons at all layers, through duality can be reduced to a local error for each synapse. In this sense, there is nothing special about feedback connections. Even without the feedback ($\gamma=0$ limit), each layer performs gradient-based learning.   Some other differences are: 1) CHL performs approximate gradient-descent. Our network performs exact gradient descent-ascent. 2) CHL does not have lateral connections within a layer, our network does. 3) CHL has two (clamped and unclamped) phases for neural dynamics, our network has only one. 

\end{document}